\pdfoutput=1
\documentclass[10pt,journal,cspaper,compsoc]{IEEEtran}

\usepackage{cite}

\usepackage{times}
\usepackage{xcolor}
\usepackage{soul}
\usepackage[utf8]{inputenc}
\usepackage{epsfig}
\usepackage{times}
\usepackage{helvet}
\usepackage{courier}
\usepackage{amsthm}
\usepackage{amssymb,amsmath}
\usepackage{dsfont}
\usepackage{graphicx}
\usepackage{url}
\usepackage{bm}
\usepackage{algorithm,algorithmic}
\theoremstyle{definition}

\newtheorem{cor}{Corollary}
\newtheorem{defn}{Definition}

\newtheorem{pro}{Proposition}

\newtheorem{lemma}{Lemma}
\DeclareMathOperator{\Lip}{Lip}

\DeclareMathOperator{\diam}{diam}
\DeclareMathOperator{\ddim}{ddim}

\DeclareMathOperator{\sign}{sign}

\DeclareMathOperator{\sumkmin}{sumKmin}
\usepackage{tabularx}

\begin{document}

\title{Learning Local Metrics and Influential Regions for Classification}
%
%
%

\author{Mingzhi~Dong, Yujiang~Wang, Xiaochen~Yang, Jing-Hao~Xue
\thanks{M.~Dong, X.~Yang and J.-H.~Xue are with the Department of Statistical Science, University College London, London WC1E 6BT, UK (e-mail: mingzhi.dong.13@ucl.ac.uk; xiaochen.yang.16@ucl.ac.uk; jinghao.xue@ucl.ac.uk).}
\thanks{Y.~Wang is with the Department of Computing, Imperial College London, London SW7 2AZ, UK (e-mail: yujiang.wang14@imperial.ac.uk).}}


\maketitle

\begin{abstract}
The performance of distance-based classifiers heavily depends on the underlying distance metric, so it is valuable to learn a suitable metric from the data. To address the problem of multimodality, it is desirable to learn local metrics. In this short paper, we define a new intuitive distance with local metrics and influential regions, and subsequently propose a novel local metric learning method for distance-based classification. Our key intuition is to partition the metric space into influential regions and a background region, and then regulate the effectiveness of each local metric to be within the related influential regions. We learn local metrics and influential regions to reduce the empirical hinge loss, and regularize the parameters on the basis of a resultant learning bound. Encouraging experimental results are obtained from various public and popular data sets.
\end{abstract}

\begin{IEEEkeywords}
Distance-based classification, distance metric, metric learning, local metric.
\end{IEEEkeywords}

\section{Introduction}

Classification is a fundamental task in the field of machine learning. While deep learning classifiers have obtained superior performance on numerous applications, they generally require a large amount of labeled data. For small data sets, traditional classification algorithms remain valuable. 

The nearest neighbor (NN) classifier is one of the oldest established methods for classification, which compares the distances between a new instance and the training instances. 
However, with different metrics, the performance of NN would be quite different. 
Hence it is very beneficial if we can find a well-suited and adaptive distance metric for specific applications. 
To this end, metric learning is an appealing technique. 
It enables the algorithms to automatically learn a metric from the available data. 
Metric learning with a convex objective function was first proposed in the seminal work of Xing et al.~\cite{xing2002distance}. 
After that, many other metric learning methods have been developed and widely adopted, such as the Large Marin Nearest Neighbor (LMNN)~\cite{weinberger2009distance} and the Information Theoretic Metric Learning~\cite{davis2007information}. 
Some theoretical work has also been proposed for metric learning, especially on deriving different generalization bounds~\cite{jin2009regularized,guo2014guaranteed,cao2016generalization,verma2015sample} and deep networks have been used to represent nonlinear metrics ~\cite{hu2014discriminative,lu2015multi}. 
In addition, metric learning methods have bee developed for specific purposes, including 
multi-output tasks~\cite{liu2018metric}, multi-view learning~\cite{hu2017sharable}, medical image retrieval~\cite{yang2010boosting}, kinship verification tasks~\cite{lu2014neighborhood,yan2014discriminative}, face recognition tasks ~\cite{huang2017cross}, tracking problems~\cite{wang2017tracklet} and so on.

Most aforementioned methods use a single metric for the whole metric space and thus may not be well-suited for data sets with multimodality. 
To solve this problem, local metric learning algorithms have been proposed~\cite{frome2007learning,weinberger2009distance,wang2012parametric,huang2013reduced,bohne2014large,shi2014sparse,saxena2015coordinated,st2017sparse, noh2018generative}. 

Most of these localized algorithms can be categorized into two groups: 1) Each data point or cluster of data points has a local metric $M(\bm x_i)$. 
This, however, results in an asymmetric distance as illustrated in~\cite{wang2012parametric}, i.e.~$M(\bm x_i)\neq M(\bm x_j)$ would cause $D(\bm x_i, \bm x_j; M(\bm x_i))\neq D(\bm x_j, \bm x_i; M(\bm x_j))$. 2) Each line segment or cluster of line segments has a local metric, i.e.~$M(\bm x_i, \bm x_j)$. 
The definitions of $M(\bm x_i, \bm x_j)$, such as  $\sum_{k} w_k (\bm x_i, \bm x_j) M_k $ in \cite{bohne2014large} where $w_k$ is defined as $P(k|\bm x_i)+ P(k|\bm x_j)$ to guarantee the symmetry and  $P(k|\bm x_i)$ or $P(k|\bm x_j)$ is based on the posterior probability that the point $\bm x$ belongs to the $k$th Gaussian cluster in a Gaussian mixture (GMM), are nonetheless not very intuitive.

\begin{figure}
\centering
\includegraphics[width=0.4\textwidth]{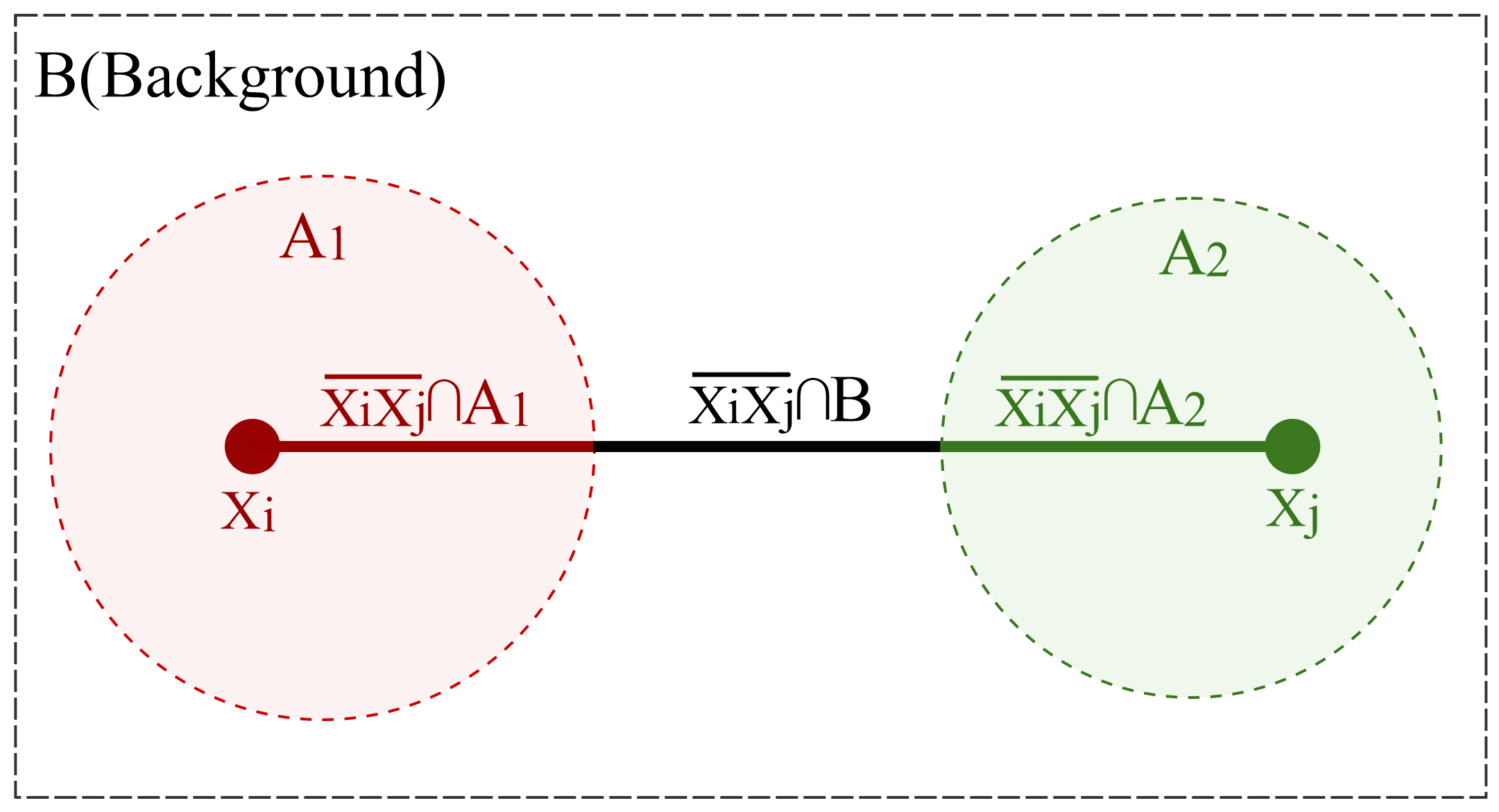}
\caption{An example of calculating the distance between two points $\bm x_i$ and $\bm x_j$. $ A_1$ and $A_2$ are different influential regions with metrics $M(A_1)$ and $M( A_2)$, and B is the background region with metric $M(B)$. The distance between $\bm x_i$ and $\bm x_j$ equals to the sum of three line segments' local distances, i.e. $l(\overline{\bm x_i \bm x_j} \cap A_1;M(A_1))$, $l(\overline{\bm x_i \bm x_j} \cap A_2;M(A_2))$ and $l(\overline{\bm x_i \bm x_j} \cap B;M(B))$.}\label{fig_distance_intuition}
\end{figure}

In this short paper, we define an intuitive, new symmetric distance, and a novel local metric learning method.
By splitting the metric space into influential regions and a background region, we define the distance between any two points as the sum of lengths of line segments in each region, as illustrated in Figure~\ref{fig_distance_intuition}. 
Building multiple influential regions solves the multimodality issues; and learning a suitable local metric in each influential region improves class separability, as shown in Figure~\ref{fig_local_area}.


\begin{figure}
\centering
\includegraphics[width=0.4\textwidth]{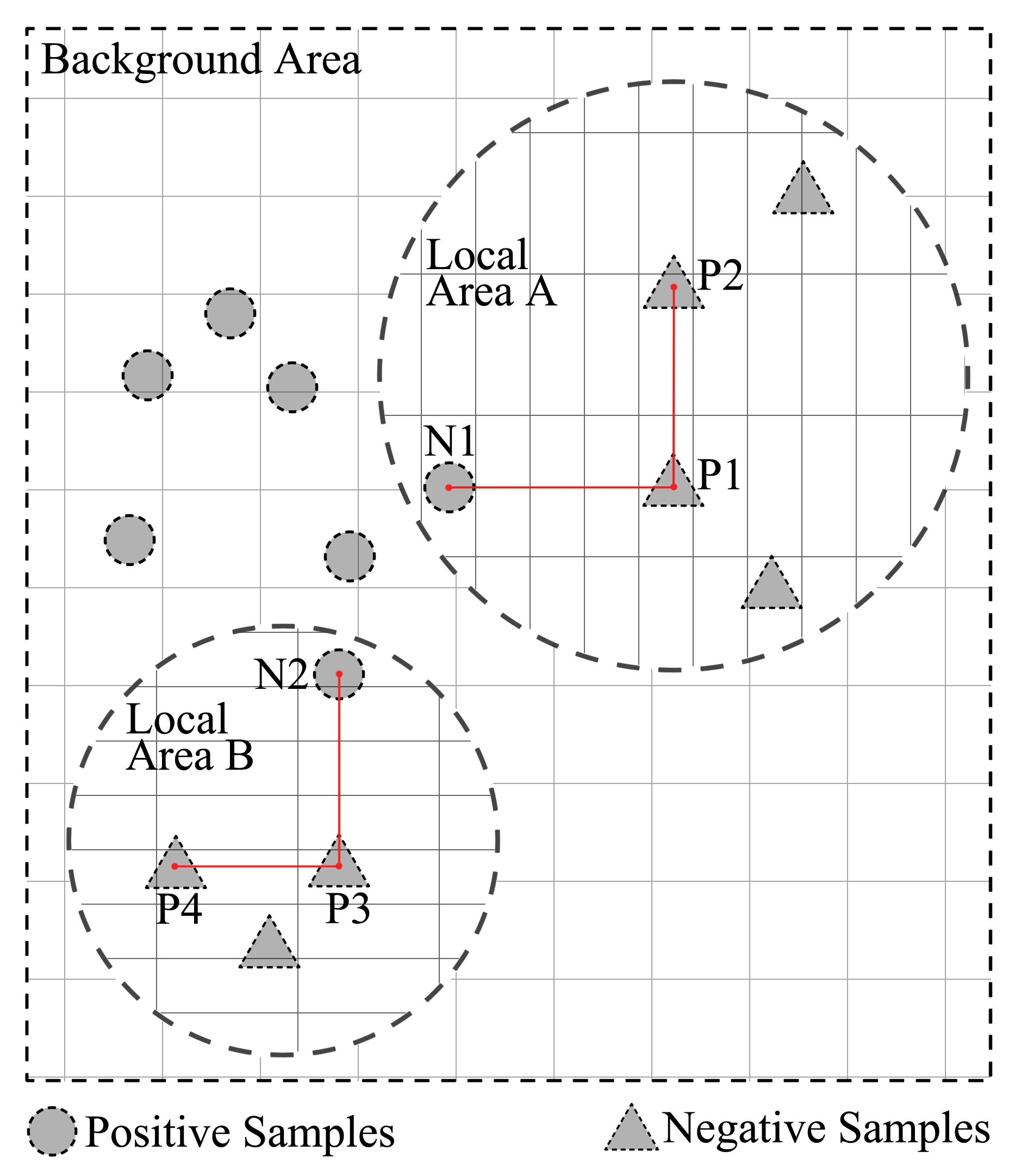}
\caption{An illustration of learning local influential areas. The distance between the adjacent vertical/horizontal grids is one unit. The location and radius of a local area could be learned and a suitable local metric could help to enhance the separability of the data, such as increasing $l(\overline{N_1 P_1})$ and $l(\overline{N_2 P_3})$ while decreasing $l(\overline{P_1 P_2})$ and $l(\overline{P_3 P_4})$.}\label{fig_local_area}
\end{figure}

To establish our new distance and local metric learning method, we first define some key concepts, namely influential regions, local metrics and line segments, which lead to the definition of the new distance. 
Then we calculate the distance by discussing the geometric relationship between line segment and influential regions. After that, we use the proposed local metric to build a novel classifier and study its learnablity. 
The penalty terms from the derived learning bound, together with the empirical hinge loss, form an optimization problem, which is solved via gradient descent due to the non-convexity. 
Finally we experiment the proposed local metric learning algorithm on 14 publicly available data sets. On eight of these data sets, the proposed algorithm achieves the best performance, much better than the state-of-the-art metric learning competitors.

\section{Definitions of Influential Regions, Local Metrics and Distance}

In this section, we will first define influential regions $A_s, s=1,\dots, S$, and the background region $B$. 
With a local metric for each region $M(A_s)$ and $M(B)$, the distance between $\bm x_i$ and $\bm x_j$ will be defined as the sum of lengths of line segments in each influential region and the background region, as illustrated in Figure~\ref{fig_distance_intuition}. 
Since the metric is defined with respect to line segments, the distance is symmetric, i.e. $D(\bm x_i, \bm x_j)=D_{M(\overline{\bm x_i \bm x_j})}(\bm x_i, \bm x_j)=D_{M(\overline{\bm x_j \bm x_i})}(\bm x_j, \bm x_i)=D(\bm x_j, \bm x_i)$.

To simplify the calculation required later, we restrict the shape of each influential region to be a ball. 

\begin{defn}
\textit{Influential regions} are defined to be any set of $n$-balls inside the metric space: 
\begin{equation*}
A = \{ A_s, s=1,\dots, S\},
\end{equation*}
where $S$ denotes the number of influential regions; $A_s= Ball(\bm o_s, r_s)$, in which $Ball(\bm o_s, r_s)$ denotes a ball with the center at $\bm o_s$ and radius of $r_s$; the location of each influential region is determined by the Euclidean distance; and points $\bm x\in A_s$ construct a set with the following form
\begin{equation}\label{defn_influentialregion}
    \{\bm x|(\bm o_s - \bm x)^T (\bm o_s - \bm x) \leq r_s^2\}. 
\end{equation}
\end{defn}
 
\begin{defn}
\textit{Background region} is defined to be the region excluding influential regions:
\begin{equation*}
    B=U- \bigcup_{s=1,\dots,S} A_s,
\end{equation*}
where $U$ indicates the universe set.
\end{defn}


Throughout this paper, the distance between two points $\bm x_i$ and $\bm x_j$ is equivalent to the length of line segment $\overline{\bm x_i \bm x_j}$, i.e. $D(\bm x_i, \bm x_j) = l(\overline{\bm x_i \bm x_j})$. Length $l(\overline{\bm x_i \bm x_j})$ in influential regions and the background region will be defined separately with respective metrics. 

\begin{defn}
Each influential region $A_s$ has its own \textit{local metric} $M(A_s)$. The length of a line segment $\overline{\bm x_i \bm x_j}$ inside an influential region $A_s$ is defined as\footnote{Since influential regions are restricted to be ball-shaped and a ball is a convex set, the line segment  $\overline{\bm x_i \bm x_j}$ would lie in the ball for any two point $\bm x_i$ and $\bm x_j$ inside the ball. } 
\begin{equation}
\begin{split}
  l(\overline{\bm x_i \bm x_j}; M(A_s))  =& D_{M(A_s)}(\bm x_i, \bm x_j) \\
    = & \sqrt{(\bm x_i -\bm x_j)^T M(A_s)(\bm x_i -\bm x_j)  }.
\end{split}
\end{equation}
To make illustrations more intuitive, the distance adopted in this paper will be based on the Mahalanobis distance\footnote{This is different the usually adopted squared Mahalanobis distance and enjoys convenience when solving the optimization problem.}. 
\end{defn}

\begin{defn}
The background region $B$ has a \textit{background metric} $M(B)$. For any two points $\bm x_i, \bm  x_j \in B$ and  $\overline{\bm x_i \bm x_j} \subset B$, the length of a line segment is defined as
\begin{eqnarray*}
    l(\overline{\bm x_i \bm x_j}; M(B)) & =& D_{M(B)}(\bm x_i, \bm x_j) \\
    &= & \sqrt{(\bm x_i -\bm x_j)^T M(B)(\bm x_i -\bm x_j)  }.
\end{eqnarray*}
\end{defn}
We make two remarks here:
\begin{enumerate}
    \item While the metrics $M(A_s)$ and $M(B)$ will be learned inside influential regions and the background region, the Euclidean distance is used to determine the location of influential regions.
    \item For $\bm x_i,\bm x_j \in B$ and $\overline{\bm x_i \bm x_j} \not\subset B$, the distance between $\bm x_i$ and $\bm x_j$ is generally different from $D_{M(B)}(\bm x_i, \bm x_j)$. It is because some parts of the line segment $\overline{\bm x_i \bm x_j}$ may lie in influential regions so their lengths should be calculated via the related local metrics.
\end{enumerate}

To calculate the distance between any $\bm x_i\in U$ and $\bm x_j\in U$, we need to consider the relationship between the line segment $\overline{\bm x_i \bm x_j}$ and influential regions, which can be simplified as one of the following three cases: no-intersection, tangent and with-intersection. 

\begin{defn}
The \textit{intersection} of a line segment $\overline{\bm x_i \bm x_j}$ and an influential region $A_s$ is denoted as $A_s \cap \overline{\bm x_i \bm x_j}$. 
In the case of no-intersection,
$A_s\cap \overline{\bm x_i \bm x_j}=\emptyset$; 
in the case of tangent, $A_s \cap \overline{\bm x_i \bm x_j}= \bm t^s_{ij}$, where $\bm t^s_{ij}$ is the tangent point; 
in the case of with-intersection,  $A_s\cap \overline{\bm x_i \bm x_j}= \overline{\bm p^s_{ij} \bm q^s_{ij}}$, 
where $\overline{\bm p^s_{ij} \bm q^s_{ij}}$ is the maximum sub-line segment of $\overline{\bm x_i \bm x_j}$ inside $A_s$, $\bm p^s_{ij}$ is the point which lies closer to $\bm x_i$ and $\bm q^s_{ij}$ is the point which lies closer to $\bm x_j$.
On the other hand, the \textit{intersection} of a line segment $\overline{\bm x_i \bm x_j}$ and the background region B is defined as 
\begin{equation}
\begin{split}
    B \cap \overline{\bm x_i \bm x_j} &=  \overline{\bm x_i \bm x_j} - \bigcup_{s=1\dots S} (A_s\cap \overline{\bm x_i \bm x_j}),
\end{split}
\end{equation}
where $\bigcup_{s=1\dots S}  (A_s \cap \overline{\bm x_i \bm x_j})$ is the union of intersections between the line segment and all influential regions. It could also be understood as a set of non-overlapping line segments\footnote{This could be easily proved by recursively combining any overlapping line segments until no overlapping one is found.}.
\end{defn}

Accordingly, the length of line segment $\overline{\bm x_i \bm x_j}$ can be calculated through the length of intersection. 

\begin{defn}
The \textit{length of intersection} of a line segment $\overline{\bm x_i \bm x_j}$ and an influential region $A_s$ is defined as $l(A_s\cap \overline{\bm x_i \bm x_j}; M(A_s))$.  
In the case of tangent or no-intersection, $l(A_s\cap \overline{\bm x_i \bm x_j}; M(A_s))\triangleq 0 $; in the case of with-intersection, it is defined to be the length of $\overline{\bm p^s_{ij} \bm q^s_{ij}}$, i.e.~$l(A_s \cap \overline{\bm x_i \bm x_j}; M(A_s)) = l(\overline{\bm p^s_{ij} \bm q^s_{ij}}; M(A_s))$.
On the other hand, the \textit{length of the intersection} of a line segment $\overline{\bm x_i \bm x_j}$ and the background region $B$ is defined as 
\begin{equation}
\begin{split}
    l(B\cap \overline{\bm x_i \bm x_j}; M(B))& =  l(\overline{\bm x_i \bm x_j}; M(B)) \\
&-l( \bigcup_{s=1\dots S} (A_s\cap \overline{\bm x_i \bm x_j});M(B)). 
\end{split}
\end{equation}
\end{defn}
\begin{defn} 
The \textit{length of line segment} is defined as
\begin{equation}\label{distance}
\begin{split}
   l(\overline{\bm x_i\bm x_j}; M(\overline{\bm x_i \bm x_j})) 
   &=  \sqrt{(\bm x_i -\bm x_j)^T M(\overline{\bm x_i \bm x_j})(\bm x_i -\bm x_j)  }\\
   &=l(B\cap \overline{\bm x_i \bm x_j}; M(B))\\
   &+\sum_s l(A_s \cap \overline{\bm x_i \bm x_j}; M(A_s)),
\end{split}
\end{equation}
where $M(\overline{\bm x_i \bm x_j})$ is the metric of the line segment $\overline{\bm x_i \bm x_j}$. ${M(\overline{\bm x_i \bm x_j})}$ will be simplified as ${M}$ afterwards.
\end{defn}

\section{Calculation of Distances}

\subsection{Calculation of the length of intersection with influential regions}


\begin{figure}
\centering
\includegraphics[width=0.45\textwidth]{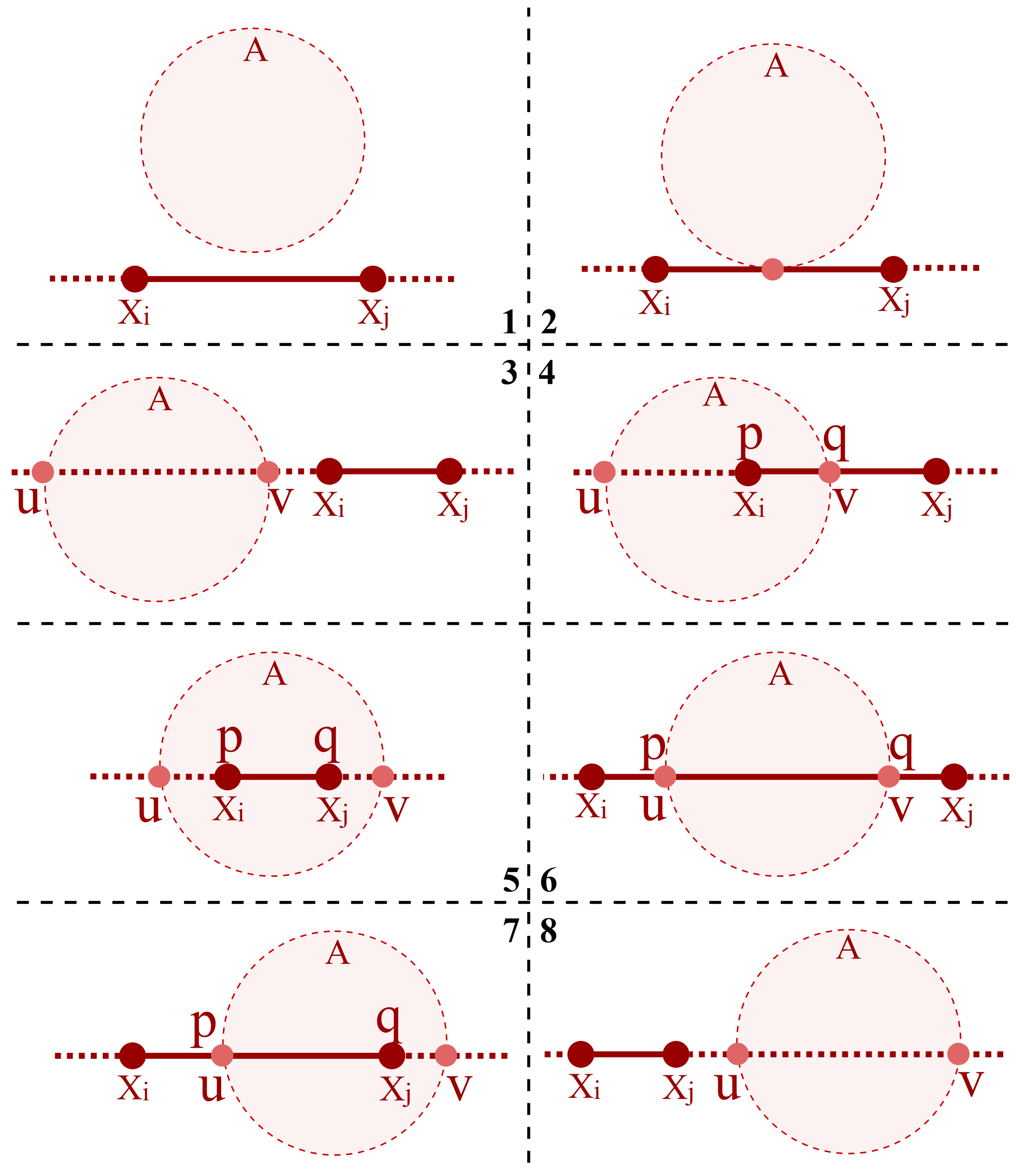}
\caption{ The positions of $\bm u, \bm v$ (intersection points between line $\bm x_i \bm x_j$ and the influential region $A$) and $\bm p, \bm q$ (intersection points between line segment $\overline{\bm x_i \bm x_j}$ and $A$) under different situations.}
\label{fig_uv}
\end{figure}

\begin{table} 
\caption{Summary of notation.} 
\begin{center}
\begin{tabular}{c|c}
\hline
Notation & Detail \\
\hline
$a$ &    $(\bm x_j-\bm x_i)^T (\bm x_j-\bm x_i)$ \\
\hline
$b$ &   2$(\bm x_j - \bm x_i)^T (\bm x_i - \bm o_s)$\\
\hline
$c$ &   $(\bm x_i -\bm o_s)^T  (\bm x_i - \bm o_s)-r_s^2$\\
\hline
$\Delta$ &  $b^2-4ac$\\
\hline
$   \lambda_u$ & $\frac{-b -\sqrt{\Delta} }{2a}$\\
\hline
$\lambda_v$ & $\frac{-b +\sqrt{\Delta} }{2a}$ \\
\hline
$  \bm u$ & $\bm x_i + \lambda_u(\bm x_j-\bm x_i)$\\
\hline
$\bm v$ & $\bm x_i + \lambda_v(\bm x_j-\bm x_i)$\\
\hline
$  \bm p$ & $\bm x_i + \lambda_p(\bm x_j-\bm x_i)$\\
\hline
$\bm q$ & $\bm x_i + \lambda_q(\bm x_j-\bm x_i)$\\
\hline
$\gamma$ & $\lambda_q-\lambda_p$\\
\hline
\end{tabular}
\end{center}
\label{table:notation}
\end{table}

\begin{table*}
\caption{Different cases of $l(A \cap \overline{\bm x_i \bm x_j}; M(A_s))$. The column of `Line $\bm x_i \bm x_j$' indicates the relationship between the line $\bm x_i \bm x_j$ and the influential region, which is determined by the value of $\Delta$; the column of `$\overline{\bm p \bm q}$' indicates the relationship between the line segment $\overline{\bm p \bm q}$ and the influential regions, which is determined by the values of $\lambda_u$ and $\lambda_v$.} 
\label{lambda}
\begin{center}
\begin{tabular}{c c|c|c |c|c|c}
\hline
$\Delta$ & Line $\bm x_i \bm x_j$ & Values of $\lambda_u$ $\lambda_v$ &  $\overline{\bm p \bm q}$ & Values of $\lambda_p$ $\lambda_q$   & $l= l(A_s \cap \overline{\bm x_i \bm x_j}; M(A_s))$ &Illustration  \\
\hline
$\Delta < 0$ & No-intersect&   & & &$l=0$&Figure \ref{fig_uv}.1\\
\hline
$\Delta = 0$ & Tangent&   & & & $l=0$&Figure \ref{fig_uv}.2\\
\hline
&& $\lambda_u<0, \lambda_v < 0$
&$\emptyset$& $\lambda_p,\lambda_q \triangleq 0$
& &Figure \ref{fig_uv}.3\\
&& $\lambda_u<0, 0\leq \lambda_v  \leq1$
&$\overline{\bm x_i \bm v}$& $\lambda_p=0, \lambda_q=\lambda_v$
& &Figure \ref{fig_uv}.4\\
$\Delta > 0$ & with-intersect& $\lambda_u<0, \lambda_v > 1$
&$\overline{\bm x_i \bm x_j}$& $\lambda_p=0, \lambda_q=1$
&\tiny{ $l=\gamma \sqrt{(\bm x_i-\bm x_j)^T M(A_s) (\bm x_i-\bm x_j)}$}&Figure \ref{fig_uv}.5\\
&& $0\leq \lambda_u\leq 1, 0\leq \lambda_v  \leq 1$
&$\overline{\bm u \bm v}$& $\lambda_p=\lambda_u, \lambda_q=\lambda_v $
&$\gamma=\lambda_q-\lambda_p$&Figure \ref{fig_uv}.6\\
&& $0\leq \lambda_u\leq 1,  \lambda_v  > 1$
&$\overline{\bm u \bm x_j}$& $\lambda_p=\lambda_u, \lambda_q=1$
& &Figure \ref{fig_uv}.7\\
&& $\lambda_u>1,  \lambda_v  > 1$
&$\emptyset$& $\lambda_p,\lambda_q \triangleq 1$
& &Figure \ref{fig_uv}.8\\
\hline
\end{tabular}
\end{center}
\end{table*}

We will first provide an intuitive explanation of calculating the length of intersection with influential regions, as illustrated in Figure \ref{fig_uv}. 
If the line $\bm x_i \bm x_j$ does not intersect with or is the tangent to the influential ball, the length is zero. This is equivalent to identifying the start and end points of line $\bm x_i\bm x_j$ and the ball, $\bm u, \bm v$, via one variable quadratic equation.
If the line intersects with the ball, we will calculate the length by considering the relationship between the intersection of the line $\bm x_i \bm x_j$ and the influential ball, i.e.~$\bm u \bm v$, and the intersection of the line segment $\overline{\bm x_i \bm x_j}$ and the influential ball, i.e.~$\bm p \bm q$. $\bm p, \bm q$ can be obtained based on points $\bm u, \bm v$ and the constraint that the start and end points should be on the linear segment $\overline{\bm x_i \bm x_j}$.


\begin{defn}
The \textit{intersection points} of the line $\bm x_i\bm x_j$ and the influential region $A_s$ are represented as $\bm u=\bm x_i + \lambda_u(\bm x_j -\bm x_i)$ and $\bm v=\bm x_i+ \lambda_v(\bm x_j -\bm x_i)$, where $ \lambda_u,\lambda_v \in \mathds{R}$, $\lambda_u \leq \lambda_v$ and $\lambda_u,\lambda_v$ are called the \textit{intersection coefficients} between the line $\bm x_i \bm x_j$ and $A_s$. The \textit{intersection points} of the line segment $\overline{\bm x_i\bm x_j}$ and the influential region are represented as $\bm p=\bm x_i + \lambda_p(\bm x_j -\bm x_i)$ and $\bm q=\bm x_i+ \lambda_q(\bm x_j -\bm x_i)$, where $0\leq \lambda_p \leq \lambda_q \leq 1$ and $\lambda_p,\lambda_q$ are called the \textit{intersection coefficients} between the line segment $\overline{\bm x_i \bm x_j}$ and $A_s$.  $\gamma=\lambda_q-\lambda_p$ is called the \textit{intersection ratio}.
\end{defn}

\begin{pro}
The length of intersection between line segment $\overline{\bm x_i \bm x_j}$ and the influential region $A_s$, with the intersection points $\bm p,\bm q$ and intersection coefficients $\lambda_p,\lambda_q$, is
\begin{equation}
\begin{split}
l(A \cap \overline{\bm x_i \bm x_j}; M(A_s)) &= \sqrt{(\bm q-\bm p)^T M(A_s) (\bm q-\bm p)} \\
&=\gamma \sqrt{(\bm x_i-\bm x_j)^T M(A_s) (\bm x_i-\bm x_j)}.
\end{split}
\end{equation}
\end{pro}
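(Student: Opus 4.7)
The plan is to reduce the statement to a short algebraic calculation by unpacking the relevant definitions in order; no new machinery is required, and the whole argument is essentially bookkeeping.

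First I would invoke the definition of the length of intersection in the with-intersection case, which by fiat sets $l(A_s\cap\overline{\bm x_i\bm x_j};M(A_s))$ equal to $l(\overline{\bm p^s_{ij}\bm q^s_{ij}};M(A_s))$. Because $A_s=Ball(\bm o_s,r_s)$ is convex and both $\bm p$ and $\bm q$ lie in $A_s$ by construction, the entire segment $\overline{\bm p\bm q}$ also lies in $A_s$; this is precisely the footnote justifying the intra-region length definition. I may therefore apply that definition directly to obtain $l(\overline{\bm p\bm q};M(A_s))=\sqrt{(\bm p-\bm q)^TM(A_s)(\bm p-\bm q)}$, which is the first equality in the statement.

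Next I would substitute the parametric forms $\bm p=\bm x_i+\lambda_p(\bm x_j-\bm x_i)$ and $\bm q=\bm x_i+\lambda_q(\bm x_j-\bm x_i)$ from the definition of intersection coefficients, so that $\bm q-\bm p=(\lambda_q-\lambda_p)(\bm x_j-\bm x_i)=\gamma(\bm x_j-\bm x_i)$. Pulling the scalar $\gamma$ out of the quadratic form and taking the square root yields $|\gamma|\sqrt{(\bm x_i-\bm x_j)^TM(A_s)(\bm x_i-\bm x_j)}$. The ordering convention $\lambda_p\le\lambda_q$ forces $\gamma\ge 0$, so the absolute value can be dropped and the second equality follows.

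There is no serious obstacle. The only two items that each need a line of justification are the convexity of $A_s$, which is what permits the intra-region length formula to be applied to $\overline{\bm p\bm q}$, and the sign of $\gamma$, which is what allows the absolute value to be removed. If one wishes to subsume the no-intersection and tangent cases under the same formula, it suffices to note that the conventions $\lambda_p=\lambda_q\triangleq 0$ or $\lambda_p=\lambda_q\triangleq 1$ recorded in Table~\ref{lambda} force $\gamma=0$, so $\gamma\sqrt{(\bm x_i-\bm x_j)^TM(A_s)(\bm x_i-\bm x_j)}=0$, which agrees with the definition $l(A_s\cap\overline{\bm x_i\bm x_j};M(A_s))\triangleq 0$ in those degenerate cases.
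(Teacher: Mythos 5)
Your proposal is correct and matches the paper's treatment: the paper states this proposition without proof, regarding it as an immediate consequence of the definitions, and your unpacking (definitional reduction to $l(\overline{\bm p\bm q};M(A_s))$, substitution of the parametric forms so that $\bm q-\bm p=\gamma(\bm x_j-\bm x_i)$, and extraction of the nonnegative scalar $\gamma$ from the quadratic form) is exactly the intended argument. Your two added justifications (convexity of the ball and the sign of $\gamma$) and the remark on the degenerate cases are sound and only make explicit what the paper leaves implicit.
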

As shown in the above proposition, the length of intersection can be calculated given the local metric $M(A_s)$ and $\gamma$, where the latter term can be obtained from $\lambda_q$ and $\lambda_p$.

Now we discuss the computation of $\gamma$, which can be divided into two steps.

1) Calculate the intersection points of the line $\bm x_i \bm x_j$ and the ball: $\bm u$ and $\bm v$, i.e.~$\bm x_i + \lambda_u(\bm x_j-\bm x_i)$ and $\bm x_i + \lambda_v(\bm x_j-\bm x_i)$. 

The coefficients $\lambda_u$ and $\lambda_v$ could be easily solved through the following quadratic equation with one variable:
\begin{equation}\label{lambda12}
\|\bm x_i + \lambda(\bm x_j - \bm x_i) -\bm o_s\|_2^2 = r_s^2,
\end{equation}
with $\Delta = b^2-4ac =[2(\bm x_j - \bm x_i)^T (\bm x_i - \bm o_s) ]^2 - 4[(\bm x_j-\bm x_i)^T (\bm x_j-\bm x_i)] [(\bm x_i -\bm o_s)^T  (\bm x_i - \bm o_s)-r_s^2]$; and when $\Delta>0$, the solutions $\lambda^s_{u,ij} \leq  \lambda^s_{v,ij}$ to the above equation are
\begin{align*}
\lambda^s_{u,ij} &=\frac{-b -\sqrt{\Delta} }{2a} = \frac{-2(\bm x_j-\bm x_i)^T (\bm x_i - \bm o_s) - \sqrt{\Delta}}{2(\bm x_j-\bm x_i)^T (\bm x_j-\bm x_i)},\\
\lambda^s_{v,ij} &=\frac{-b +\sqrt{\Delta} }{2a} = \frac{-2(\bm x_j-\bm x_i)^T (\bm x_i - \bm o_s) + \sqrt{\Delta}}{2(\bm x_j-\bm x_i)^T (\bm x_j-\bm x_i)}.
\end{align*}
Hence the two intersection points between the ball and the line become
\begin{align*}
\bm u^s_{ij}&= \bm x_i + \lambda^s_{u,ij} (\bm x_j - \bm x_i),\\
\bm v^s_{ij}&= \bm x_i + \lambda^s_{v,ij} (\bm x_j - \bm x_i).
\end{align*} 

For simplicity, the superscript $s$ and subscript $ij$ for $\lambda$, $u$, $v$, $p$ and $q$ will be discarded if no confusion is caused. 

2) Calculate the intersection points of the line segment $\overline{\bm x_i \bm x_j}$ and the ball: $\bm p$ and $\bm q$, i.e.~$\bm x_i + \lambda_p(\bm x_j-\bm x_i)$ and $\bm x_i + \lambda_q(\bm x_j-\bm x_i)$.

We check the number of solutions to (\ref{lambda12}). If (\ref{lambda12}) has 0 or 1 solution, the line has no intersection or is tangent to the region, and thus $l(A \cap \overline{\bm x_i \bm x_j}; M(A_s))=0$. 
If it has two solutions, the intersection between the line and the ball $A_s$ is a line segment $\overline{\bm u \bm v}$. 
Based on the value of $\lambda_u, \lambda_v$\footnote{If and only if the value of $\lambda_u$ or $\lambda_v$ lies in the range of $[0,1]$, the corresponding point lies inside the line segment $\overline{\bm x_i \bm x_j}$.}, we can obtain the relationship between $\overline{\bm u \bm v}$ and $\overline{\bm p \bm q}$ and get the values of $\lambda_p$ and $\lambda_q$ from 
\begin{align*}
    \lambda_p&=\min(\max(\lambda_u, 0),1),\\
    \lambda_q&=\min(\max(\lambda_v, 0),1).
\end{align*}

A summary of the notation used in this section is listed in Table~\ref{table:notation}; the details of the distance calculation are illustrated in Figure~\ref{fig_uv} and Table~\ref{lambda}.

\subsection{Calculation of the length of intersection with local metrics}

\begin{pro}\label{pro:D-nonoverlap}
In the case of non-overlapping influential regions, i.e. $A_i \cap A_j = \emptyset,  \forall i\neq j$,  
\begin{equation}
\begin{split}
D_M(\bm x_i\bm x_j) & \triangleq l(\overline{\bm x_i\bm x_j}; M(\overline{\bm x_i \bm x_j}))\\
& =  \gamma_b \sqrt{(\bm x_i-\bm x_j)^T M(B) (\bm x_i-\bm x_j)} \\
&+ \sum_s \gamma_s \sqrt{(\bm x_i-\bm x_j)^T M(A_s) (\bm x_i-\bm x_j)}\\
& =  (1-\sum_s \gamma_s) \sqrt{(\bm x_i-\bm x_j)^T M(B) (\bm x_i-\bm x_j)} \\
&+ \sum_s \gamma_s \sqrt{(\bm x_i-\bm x_j)^T M(A_s) (\bm x_i-\bm x_j)},
\end{split}
\label{eq:D-nonoverlap}
\end{equation}
where $\gamma_b$ is defined as the intersection ratio of the background region, and in the non-overlapping case $\gamma_b=1-\sum_s \gamma_s$.
\end{pro}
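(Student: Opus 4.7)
The plan is to start from the defining equation (5) for the length of a line segment, which expresses $l(\overline{\bm x_i\bm x_j}; M(\overline{\bm x_i \bm x_j}))$ as the sum of the background-intersection length and the influential-region-intersection lengths, and then reduce each piece to the closed-form $\gamma$-expressions using the previously established Proposition on $l(A_s \cap \overline{\bm x_i \bm x_j}; M(A_s))$.

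First I would dispose of the easy half: by the preceding Proposition, for every $s$ we already have $l(A_s \cap \overline{\bm x_i \bm x_j}; M(A_s)) = \gamma_s \sqrt{(\bm x_i-\bm x_j)^T M(A_s)(\bm x_i-\bm x_j)}$, so the second summand in (5) is exactly $\sum_s \gamma_s \sqrt{(\bm x_i-\bm x_j)^T M(A_s)(\bm x_i-\bm x_j)}$. This part requires no further argument.

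The heart of the proof is evaluating the background term $l(B \cap \overline{\bm x_i \bm x_j}; M(B))$. By its definition it equals $l(\overline{\bm x_i \bm x_j}; M(B)) - l\bigl(\bigcup_{s} (A_s \cap \overline{\bm x_i \bm x_j}); M(B)\bigr)$. Here the non-overlap hypothesis $A_i \cap A_j = \emptyset$ is essential: it guarantees that the sub-segments $A_s \cap \overline{\bm x_i \bm x_j}$ are pairwise disjoint sub-segments of $\overline{\bm x_i \bm x_j}$, so the length of their union (measured by any common metric, and in particular $M(B)$) additively decomposes as $\sum_s l(A_s \cap \overline{\bm x_i \bm x_j}; M(B))$. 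Applying the same $\gamma$-formula from the Proposition but with $M(B)$ in place of $M(A_s)$ gives $l(A_s \cap \overline{\bm x_i \bm x_j}; M(B)) = \gamma_s \sqrt{(\bm x_i-\bm x_j)^T M(B)(\bm x_i-\bm x_j)}$. Combining and factoring out the square root yields $l(B \cap \overline{\bm x_i \bm x_j}; M(B)) = \bigl(1 - \sum_s \gamma_s\bigr) \sqrt{(\bm x_i-\bm x_j)^T M(B)(\bm x_i-\bm x_j)}$, which simultaneously identifies $\gamma_b = 1-\sum_s \gamma_s$ and delivers the first summand of the claimed formula. Substituting both summands back into (5) completes the proof.

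The main obstacle is the additivity step $l(\bigcup_s (A_s \cap \overline{\bm x_i \bm x_j}); M(B)) = \sum_s l(A_s \cap \overline{\bm x_i \bm x_j}; M(B))$. Without the non-overlap assumption, two sub-segments might share a common interval and be double-counted; under non-overlap, however, this reduces to the observation that disjoint line segments along a common line have additive lengths under any Mahalanobis-type metric, since each sub-segment is of the form $\gamma_s(\bm x_j - \bm x_i)$ and the $\gamma_s$ come from disjoint intervals of $[0,1]$. Everything else is algebraic substitution.
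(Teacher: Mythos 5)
Your proposal is correct; the paper itself states Proposition~\ref{pro:D-nonoverlap} without proof, and your argument is precisely the derivation implicit in its definitions: expand $l(\overline{\bm x_i\bm x_j}; M(\overline{\bm x_i \bm x_j}))$ via the definition of the length of a line segment, apply the earlier proposition to get $\gamma_s \sqrt{(\bm x_i-\bm x_j)^T M(A_s)(\bm x_i-\bm x_j)}$ for each influential region, and evaluate the background term from its definition. You also correctly identify and justify the one point that actually needs the hypothesis --- that non-overlapping regions give pairwise disjoint sub-segments whose $M(B)$-lengths add, yielding $\gamma_b = 1-\sum_s \gamma_s$ --- so nothing is missing.
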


Proposition~\ref{pro:D-nonoverlap} suggests that the distance can be obtained once we have metrics ($M(A_s)$, $M(B)$) and the intersection ratio $\gamma_s$. As all calculations are in closed form, the computation is efficient.  

In the case of overlapping influential regions, we have the same formula as (\ref{eq:D-nonoverlap}):
\begin{equation}\label{equ_distance}
\begin{split}
       D_M(\bm x_i\bm x_j) & \triangleq l(\overline{\bm x_i\bm x_j}; M(\overline{\bm x_i \bm x_j}))\\ 
       & =  \gamma_b \sqrt{(\bm x_i-\bm x_j)^T M(B) (\bm x_i-\bm x_j)} \\
&+ \sum_s \gamma_s \sqrt{(\bm x_i-\bm x_j)^T M(A_s) (\bm x_i-\bm x_j)}.
\end{split}
\end{equation}
The calculation of $\gamma_b$ in (\ref{equ_distance}) is slightly different from that in (\ref{eq:D-nonoverlap}). In the following sections, we use an approximation of $\gamma_b$ for simplicity:  $\gamma_b = \max(1- \sum_s \gamma_s,0)$.

\section{Classifier and Learnability}

Lipschitz continuous functions are a family of smooth functions which are learnable \cite{gottlieb2014efficient}. In this paper, we select Lipschitz continuous functions as the classifiers. Based on the resultant learning bounds, we obtain the terms to regularize in order to improve the generalization ability.

\subsection{Classifier}

In the Euclidean space, it is intuitive to see the following classifier gives the same classification results as 1-NN:
\begin{equation*}
    f(\bm x)= \min D_{set}(\bm x, \bm X^-)-\min D_{set}(\bm x, \bm X^+),
\end{equation*}
where $f(\bm x)<0$ indicates that $\bm x$ belongs to negative class and $f(\bm x)>0$ indicates that $\bm x$ belongs to positive class; $ D_{set}(\bm x, \bm X^{-/+}) = \{D(\bm x, \bm x_t)|\forall x_t \in \mbox{negative class / positive class}\}$ is the set that contains the Euclidean distance values between $\bm x$ and any instance of the negative or positive class, and $D(\bm x_i, \bm x_j)$ indicates the Euclidean distance between $\bm x_i$ and $\bm x_j$.

K-NN considers more nearby instances and hence is more robust than 1-NN. A similar extension to consider more nearby instances based on the above equation is as follows:
\begin{equation}\label{equ_classifier}
\begin{split}
        f(\bm x)= &\frac{1}{K} \sumkmin  D_{set}(\bm x, \bm X^-)- \\
        & \frac{1}{K} \sumkmin D_{set}(\bm x, \bm X^+),
\end{split}
\end{equation}
where $\sumkmin$ denotes the sum of the $K$ minimal elements of the set. This function will be used as the classifier in our algorithm.

\subsection{Learnability of the Classifier with Local Metrics}

We will discuss the learnability of functions based on the Lipschitz constant, which characterizes the smoothness of a function. The smaller the value of Lipschitz constant, the more smooth the function is.
\begin{defn}
(\cite{weaver1999lipschitz})  The \textit{Lipschitz constant} of a function $f$ is
\begin{equation*}
\begin{split}
  \Lip(f) & = \inf\{ C\in \mathds{R}| \forall \bm x_i, \bm x_j \in \mathcal X, \\
  & \rho_{\mathcal Y} (f(\bm x_i), f(\bm x_j)) \leq C \rho_{\mathcal X}(\bm x_i, \bm x_j) \}  \\
    & =  \sup_{\bm x_i, \bm x_j \in \mathcal X: \bm x_i \neq \bm x_j} \frac{\rho_{\mathcal Y} (f(\bm x_i), f(\bm x_j))}{\rho_{\mathcal X}(\bm x_i, \bm x_j)}.
\end{split}
\end{equation*}
\end{defn}


\begin{pro}
\label{pro_lip_comp}
(\cite{weaver1999lipschitz}) Let $\Lip(f)\leq L_f$ and $\Lip(g) \leq L_g$ , then\\
(a) $\Lip(f+g) \leq L_f+L_g$; \\
(b) $\Lip(f-g) \leq L_f+L_g$; \\
(c) $\Lip(a f) \leq |a| L_f$, where $a$ is a constant.
\end{pro}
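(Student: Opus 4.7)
The plan is to work directly from the supremum form of the definition of $\Lip(\cdot)$, pushing the standard compatibility of the metric $\rho_{\mathcal Y}$ with the vector‑space operations through each inequality. Since the paper applies the proposition to real‑valued functions (so as to combine the $\sumkmin$ terms appearing in the classifier (\ref{equ_classifier})), I would assume that $\mathcal Y$ is a normed space whose metric $\rho_{\mathcal Y}(y_1,y_2)=\|y_1-y_2\|$ is translation invariant and absolutely homogeneous; this is exactly what allows $f+g$, $f-g$, and $af$ to be written in terms of differences that can be bounded.

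For part (a), I would fix arbitrary $\bm x_i\neq \bm x_j$ in $\mathcal X$, expand
\[
\rho_{\mathcal Y}\!\bigl((f+g)(\bm x_i),(f+g)(\bm x_j)\bigr)=\bigl\|\bigl(f(\bm x_i)-f(\bm x_j)\bigr)+\bigl(g(\bm x_i)-g(\bm x_j)\bigr)\bigr\|,
\]
apply the triangle inequality to split this into $\rho_{\mathcal Y}(f(\bm x_i),f(\bm x_j))+\rho_{\mathcal Y}(g(\bm x_i),g(\bm x_j))$, and then invoke the hypotheses $\Lip(f)\leq L_f$ and $\Lip(g)\leq L_g$ to obtain the bound $(L_f+L_g)\,\rho_{\mathcal X}(\bm x_i,\bm x_j)$. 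Dividing by $\rho_{\mathcal X}(\bm x_i,\bm x_j)$ and taking the supremum over all admissible pairs yields $\Lip(f+g)\leq L_f+L_g$. Part (b) is identical once the minus sign is absorbed into the norm via $\|u-v\|\leq \|u\|+\|v\|$, so I would simply remark that the same argument applied to $f+(-g)$ works, since the Lipschitz constant is invariant under negation.

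For part (c), I would use absolute homogeneity of the norm:
\[
\rho_{\mathcal Y}\!\bigl(af(\bm x_i),af(\bm x_j)\bigr)=\|a\bigl(f(\bm x_i)-f(\bm x_j)\bigr)\|=|a|\,\rho_{\mathcal Y}(f(\bm x_i),f(\bm x_j))\leq |a|\,L_f\,\rho_{\mathcal X}(\bm x_i,\bm x_j),
\]
from which dividing and taking the supremum gives $\Lip(af)\leq |a|L_f$. The case $a=0$ is trivial since $af$ is constant.

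There is no genuine obstacle here; the proposition is a direct consequence of the triangle inequality and homogeneity of the target‑space norm, together with the supremum characterisation of $\Lip$. The only subtlety worth flagging is the implicit structural assumption on $\mathcal Y$ (normed, or at least a metric group with translation‑invariant, homogeneous metric), without which $f+g$ and $af$ are not even defined; I would mention this briefly so that the subsequent application to the real‑valued classifier $f(\bm x)$ in (\ref{equ_classifier}) is unambiguous.
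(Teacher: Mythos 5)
Your proof is correct: the paper itself gives no proof of this proposition (it is quoted from \cite{weaver1999lipschitz}), and your argument via the triangle inequality and absolute homogeneity of the target-space norm, followed by taking the supremum in the quotient characterisation of $\Lip$, is the standard one and is exactly what is needed. Your remark that $\mathcal Y$ must carry enough structure for $f+g$ and $af$ to make sense is a fair point, and it is harmless here since the paper only applies the proposition to real-valued functions.
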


\begin{pro}\label{pro_lip_sumkmin}
Let the Lipschitz constant of $f_k(\bm x) \leq L_k, k=1,\dots, K$, then the 
Lipschitz constant of $ \sumkmin \{f_k(\bm x), k=1,\dots, K\}$ is bounded by $K\max_k L_k$.
\end{pro}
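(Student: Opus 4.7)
The plan is to interpret the statement in the form suggested by equation~(\ref{equ_classifier}), namely that $\sumkmin$ selects and sums the $K$ smallest elements of a finite set of real numbers of size $N\geq K$. (If $N=K$ as read literally from the statement, the claim is an immediate corollary of Proposition~\ref{pro_lip_comp}(a) applied inductively, since then $\sumkmin$ reduces to an ordinary sum.)

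First I would fix notation: let $g(\bm x) = \sumkmin\{f_1(\bm x),\dots,f_N(\bm x)\}$ and let $I(\bm x)\subseteq\{1,\dots,N\}$ be any index set of size $K$ selecting the $K$ smallest values among the $f_k(\bm x)$, so that $g(\bm x) = \sum_{k\in I(\bm x)} f_k(\bm x)$. The central observation is the minimality inequality: for any $K$-subset $J\subseteq\{1,\dots,N\}$,
\begin{equation*}
g(\bm x) \;=\; \sum_{k\in I(\bm x)} f_k(\bm x) \;\leq\; \sum_{k\in J} f_k(\bm x),
\end{equation*}
since $I(\bm x)$ is by construction a minimizer of $\sum_{k\in J} f_k(\bm x)$ over $K$-subsets~$J$.

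Applying this with the choice $J=I(\bm y)$ and subtracting $g(\bm y)=\sum_{k\in I(\bm y)} f_k(\bm y)$ yields
\begin{equation*}
g(\bm x)-g(\bm y) \;\leq\; \sum_{k\in I(\bm y)}\bigl(f_k(\bm x)-f_k(\bm y)\bigr) \;\leq\; \Bigl(\sum_{k\in I(\bm y)} L_k\Bigr)\,\rho_{\mathcal X}(\bm x,\bm y) \;\leq\; K\max_k L_k \cdot \rho_{\mathcal X}(\bm x,\bm y),
\end{equation*}
where the middle step uses the per-function Lipschitz hypothesis and the last uses $|I(\bm y)|=K$. Swapping the roles of $\bm x$ and $\bm y$ (now taking $J=I(\bm x)$) gives the matching bound for $g(\bm y)-g(\bm x)$. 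Hence $|g(\bm x)-g(\bm y)|\leq K\max_k L_k\cdot \rho_{\mathcal X}(\bm x,\bm y)$, and taking the supremum in the definition of $\Lip$ finishes the argument.

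The only nontrivial point is this swap-to-a-common-index-set step: a naive linearity argument in the style of Proposition~\ref{pro_lip_comp} does not immediately apply because $I(\bm x)$ can change discontinuously as $\bm x$ varies, so $g$ need not be a single fixed linear combination of the $f_k$. The trick circumvents this by temporarily pessimizing one side of the difference (paying a one-sided inequality) and then recovering the two-sided bound by symmetry; this is where I expect to spend any care in the writeup, as everything else reduces to summing the per-function Lipschitz bounds and bounding by $K$ times the maximum.
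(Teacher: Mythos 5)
Your proof is correct and follows essentially the same route as the paper's: a one-sided bound obtained by comparing the $K$-smallest selection at one point against a (generally suboptimal) fixed index set carried over from the other point, followed by the Lipschitz hypothesis on each $f_k$, the bound $\sum_{k\in I}L_k\leq K\max_k L_k$, and symmetry in $\bm x,\bm y$. The paper phrases the same idea via the monotonicity and constant-shift properties of $\sumkmin$ rather than an explicit index set $I(\bm y)$, but the underlying argument is identical; your version is merely more explicit about why a naive fixed-linear-combination argument does not apply.
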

\begin{proof}
$\forall x_i, x_j \in \mathcal X, k\in\{1,\dots, K\}$
\begin{equation*}
\begin{split}
   & \sumkmin\{f_k(\bm x_i)\}  \\
   = & \sumkmin\{f_k(\bm x_j + (\bm x_i - \bm x_j))\}  \\
   \leq &  \sumkmin\{f_k(\bm x_j) + L_k\|\bm x_i - \bm x_j\|\}  \\
   \leq &  \sumkmin\{f_k(\bm x_j) + (\max_k L_k) \|\bm x_i - \bm x_j\|\}\\
   = & \sumkmin\{f_k(\bm x_j)\}+  K (\max_k L_k) \|\bm x_i - \bm x_j\|.
\end{split}
\end{equation*}
Therefore,
\begin{equation*}
\begin{split}
    &\sumkmin\{f_k(\bm x_i)\}-  \sumkmin\{f_k(\bm x_j)\} \\
    &\leq  K (\max_k L_k) \|\bm x_i - \bm x_j\|.
\end{split}
\end{equation*}
Based on the definition of Lipschitz constant, the proposition is proved.
\end{proof}

\begin{lemma}
\label{lipschitz}
With distance defined with (\ref{equ_distance}), the Lipschitz constant of the classifier illustrated by (\ref{equ_classifier}) is bound by $2 (\sum_s \|M(A_s)\|_F + \|M(B)\|_F)$, where  $\|\bm \cdot \|_F$ denotes the matrix Frobenius norm.  
\end{lemma}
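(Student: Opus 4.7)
The plan is to peel the classifier $f$ from the outside inwards, invoking Propositions~\ref{pro_lip_comp} and~\ref{pro_lip_sumkmin} at each layer until the problem reduces to bounding the Lipschitz constant of a single Mahalanobis-type term $\bm x \mapsto \sqrt{(\bm x-\bm x_t)^T M (\bm x-\bm x_t)}$ viewed as a function of $\bm x$ for a fixed training point $\bm x_t$.

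For the outer layers, set $L := \sup_{\bm x_t}\Lip_{\bm x} D_M(\bm x, \bm x_t)$. By Proposition~\ref{pro_lip_sumkmin}, each of $\sumkmin\, D_{set}(\bm x, \bm X^-)$ and $\sumkmin\, D_{set}(\bm x, \bm X^+)$ is $(KL)$-Lipschitz, and Proposition~\ref{pro_lip_comp}(b,c) combined with the $1/K$ prefactors in (\ref{equ_classifier}) then yields $\Lip(f) \leq 2L$. The whole question thus reduces to bounding $L$.

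To bound $L$, I would use the decomposition (\ref{equ_distance}): $D_M(\bm x, \bm x_t) = \gamma_b d_B(\bm x) + \sum_s \gamma_s d_{A_s}(\bm x)$, where $d_R(\bm x) := \sqrt{(\bm x-\bm x_t)^T M(R)(\bm x-\bm x_t)}$ and $\gamma_b, \gamma_s \in [0,1]$. Treating the $\gamma$'s as weights bounded by $1$ and applying Proposition~\ref{pro_lip_comp}(a,c) would give $L \leq \Lip_{\bm x}(d_B) + \sum_s \Lip_{\bm x}(d_{A_s})$. Each single-region Mahalanobis term is then controlled by the reverse triangle inequality for the seminorm $\|\cdot\|_{M(R)}$, combined with the PSD bound relating $\sqrt{\lambda_{\max}(M(R))}$ to $\|M(R)\|_F$, to produce $\Lip_{\bm x}(d_R) \leq \|M(R)\|_F$. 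Assembling the pieces delivers the advertised $\Lip(f) \leq 2(\sum_s \|M(A_s)\|_F + \|M(B)\|_F)$.

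The main obstacle is the step in which the intersection ratios $\gamma_b(\bm x)$ and $\gamma_s(\bm x)$ are treated as constants: they are in fact functions of $\bm x$, so Proposition~\ref{pro_lip_comp}(c) does not apply verbatim. A cleaner way to justify the estimate is to work with the geometric length directly, noting that $\gamma_s d_{A_s}(\bm x)$ equals the Mahalanobis-weighted arc length $l(A_s \cap \overline{\bm x \bm x_t}; M(A_s))$, whose dependence on the endpoint $\bm x$ can be handled by a reverse-triangle-inequality-type argument using the additivity of length along collinear points. Making this fully rigorous---either via the geometric route or by extracting a quantitative Lipschitz bound on $\gamma_s(\bm x)$ from the closed-form expressions of Section~III-A and invoking a product rule---is the delicate part of the proof.
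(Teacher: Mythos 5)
Your proposal follows essentially the same route as the paper's proof: peel off the outer layers with Propositions~\ref{pro_lip_comp} and~\ref{pro_lip_sumkmin} to reduce to $\Lip_{\bm x} D_M(\bm x,\bm x_t)$, decompose by region, and bound each single-region Mahalanobis term by $\|M(R)\|_F$ (the paper does this via consistency of the Frobenius norm with the $\ell_2$ norm rather than via $\sqrt{\lambda_{\max}}$, an immaterial difference). The ``delicate part'' you flag --- that $\gamma_b(\bm x)$ and $\gamma_s(\bm x)$ are functions of $\bm x$, so Proposition~\ref{pro_lip_comp}(c) does not apply verbatim --- is exactly the step the paper glosses over: it writes only the pointwise inequality $D_M(\bm x,\bm x_k)\le \sum_s D_{M(A_s)}(\bm x,\bm x_k)+D_{M(B)}(\bm x,\bm x_k)$ and asserts the Lipschitz bound ``follows,'' which is a non sequitur (a pointwise upper bound does not control a Lipschitz constant), so your attempt is, if anything, more careful than the published argument.
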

\begin{proof}
Let $d_M(\bm x, \bm x_k)$ denote the Mahalanobis distance with metric $M$:
\begin{equation*}
    d_M(\bm x, \bm x_k)=\sqrt{(\bm x-\bm x_k)^T M (\bm x-\bm x_k)}.
\end{equation*}
With the identity matrix $I$, $d_I(\bm x, \bm x_k)$ is the Euclidean distance.

The Lipschitz constant of $f_1(\bm x)=d_M(\bm x, \bm x_k)$ is bounded by $\|M\|_F$ as follows:
\begin{equation*}
\begin{split}
    \Lip(f_1)&= \frac{f_1(\bm x)-f_1(\bm x_k)}{d_I(\bm x, \bm x_k)}\\
    &=  \frac{ d_M(\bm x, \bm x_k)- d_M(\bm x, \bm x_k)}{d_I(\bm x, \bm x_k)}
    \\
     & \leq \frac{d_M(\bm x, \bm x_k)}{d_I(\bm x, \bm x_k)}\\
     & \leq \frac{d_I(\bm x, \bm x_k)\|M\|_F}{d_I(\bm x, \bm x_k)}\\
     & = \|M\|_F,
\end{split}
\end{equation*}
where the first inequality follows the triangle inequality of distance, and the second inequality is based on the fact that matrix Frobenius norm is consistent with the vector $l_2$ norm\footnote{The consistence between a matrix norm $\|\cdot\|_M $ and a vector norm $\|\cdot\|_v$ indicates $\|\bm A \bm b\|_v \leq \|\bm A\|_M \|\bm b\|_v$, where $\bm A$ is a matrix, $\bm v$ is a vector, $\|\cdot\|_M$ is a matrix norm and $\|\cdot\|_v$ is a vector norm.}, i.e.
\begin{equation*}
\|(\bm x - \bm x_k)^T \bm M (\bm x - \bm x_k) \|_2 
          \leq  \|\bm x - \bm x_k\|_2^2 \|\bm M\|_F\\.
\end{equation*}

According to the definition of distance in (\ref{equ_distance}), we have
\begin{equation*}
    \begin{split}
    D_{M}(\bm x, \bm x_k)
    & \leq \sum_s  D_{M(A_s)}(\bm x, \bm x_k) + D_{M(B)}(\bm x, \bm x_k);
    \end{split}
\end{equation*}
and it follows Proposition~\ref{pro_lip_comp} that
\begin{equation*}
    \Lip( D_{M}(\bm x, \bm x_k)) \leq \sum_s \|M(A_s)\|_F + \|M(B)\|.
\end{equation*}
Based on the Lipschitz constant of $D_M(\bm x, \bm x_k)$ and the composition property illustrated  Proposition \ref{pro_lip_sumkmin}, 
\begin{equation*}
\begin{split}
   & \Lip( \sumkmin \{ D_{M}(\bm x, \bm x_k), k=1, \dots K \}) \\
    \leq &
    K \left\{ \sum_s \|M(A_s)\|_F + \|M(B)\| \right\}.
\end{split}
\end{equation*}
Finally, based on  Proposition~\ref{pro_lip_comp}, 
$f(\bm x)$ in (\ref{equ_classifier}) is bounded by $2(\sum_s \|M(A_s)\|_F + \|M(B)\|)$. 
\end{proof}

Combining the results of Proposition \ref{lipschitz} and the Corollary 6 of \cite{gottlieb2014efficient}, we can obtain the following Corollary.

\begin{cor}
Let metric space $(\mathcal X, \rho)$ have doubling dimension $\ddim(\mathcal X)$ and let $\mathcal F$ be the collection of real valued functions over $\mathcal X$ with the Lipschitz constant at most $L$. Then for any $f \in \mathcal F$ that classifies a sample of size $n$ correctly, 
if $f$ is correct on all but $k$ examples, we have with probability at least $1-\delta$
\begin{equation}
\label{Lipschitz_bound}
\begin{split}
    &P\{(\bm x, t): \sign [f(\bm x)] \neq t\} \\ 
    &\leq \frac{k}{n}+ \sqrt{ \frac{2}{n} (c\log_2 (34 en/c) \log_2 (578n) + \log_2 (4/\delta))},
\end{split}
\end{equation}
where
\begin{equation*}
    c= \Big(16(\sum_s \|M(A_s)\|_F + \|M(B)\|_F) \diam(\mathcal X, \rho)\Big)^{\ddim(\mathcal X)+1}.
\end{equation*}
$\diam$ denotes the diameter of the space and $\ddim$ denotes doubling dimension\footnote{The detailed definition can be found in~\cite{gottlieb2014efficient}}.
\end{cor}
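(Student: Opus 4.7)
The plan is essentially one of substitution: the Corollary is billed as a direct combination of Lemma~\ref{lipschitz} with a known PAC-style bound for Lipschitz classifiers (Corollary~6 of~\cite{gottlieb2014efficient}), so the proof should simply plug the former into the latter and check that the hypotheses align.

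First, I would identify the learned classifier $f$ from~(\ref{equ_classifier}) as an element of the function class $\mathcal{F}$. By Lemma~\ref{lipschitz}, its Lipschitz constant (with respect to the ground Euclidean metric on $\mathcal{X}$) satisfies
\[
\Lip(f) \;\leq\; 2\Bigl(\sum_s \|M(A_s)\|_F + \|M(B)\|_F\Bigr) \;=:\; L.
\]
Hence $f$ belongs to the family of $L$-Lipschitz real-valued functions over $(\mathcal{X}, \rho)$, which is exactly the family to which Corollary~6 of~\cite{gottlieb2014efficient} applies.

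Second, I would invoke Corollary~6 of~\cite{gottlieb2014efficient} verbatim. That result asserts that for any sample of size $n$ and any Lipschitz function with $k$ misclassifications, with probability at least $1-\delta$ the true error is bounded by $k/n$ plus the stated square-root complexity term, where the complexity constant has the form $c_0 = (8 L \, \diam(\mathcal{X}, \rho))^{\ddim(\mathcal{X})+1}$. Substituting the Lipschitz bound $L = 2(\sum_s \|M(A_s)\|_F + \|M(B)\|_F)$ from the first step absorbs the factor of $2$ into the $8$, yielding precisely
\[
c = \Bigl(16\bigl(\textstyle\sum_s \|M(A_s)\|_F + \|M(B)\|_F\bigr)\diam(\mathcal{X},\rho)\Bigr)^{\ddim(\mathcal{X})+1},
\]
which is the $c$ written in the Corollary. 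The $k/n$ and $\sqrt{(2/n)(\,\cdot\,)}$ structure is inherited unchanged.

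The only delicate point, and hence the main thing to verify rather than compute, is that the notion of Lipschitz continuity used in Lemma~\ref{lipschitz} is compatible with the hypothesis of Corollary~6 of~\cite{gottlieb2014efficient}: specifically, that $\diam(\mathcal{X}, \rho)$ and $\ddim(\mathcal{X})$ are to be interpreted under the Euclidean ground metric $\rho = d_I$ used throughout Lemma~\ref{lipschitz} (this is consistent, since the influential-region geometry in the Definitions is itself fixed by the Euclidean metric), and that the binary-classification formulation $\sign[f(\bm x)]$ with up to $k$ training errors matches the setup of that corollary. Once this bookkeeping is confirmed, no further work is required, and the stated inequality follows immediately from the substitution above.
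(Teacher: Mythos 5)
Your proposal matches the paper's own (implicit) proof exactly: the paper derives the Corollary by ``combining the results of Lemma~\ref{lipschitz} and Corollary~6 of~\cite{gottlieb2014efficient}'', i.e.\ substituting the Lipschitz bound $L = 2(\sum_s \|M(A_s)\|_F + \|M(B)\|_F)$ into the cited generalization bound so that the factor of $2$ combines with the $8$ to give the $16$ in $c$. Your additional check that the ground metric $\rho$, $\diam$, and $\ddim$ are all taken with respect to the Euclidean metric is sensible bookkeeping but does not change the route.
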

The above learning bound illustrates the generalization ability, i.e. the difference between the expected error $P\{(\bm x, t): \sign [f(\bm x)] \neq t\}$ and the empirical error $k/n$.
Based on the bound, reducing the value of $\sum_s \|M(A_s)\|_F + \|M(B)\|_F$ would help reduce the gap between the empirical error and the expected error.
In other words, the learning bound indicates that regularizing $\sum_s \|M(A_s)\|_F + \|M(B)\|_F$ would help improve the generalization ability of the classifier.

\section{Optimization Problem}

\begin{table*}
\caption{Partial gradients of $\frac{\partial \gamma}{\partial \bm o}$ and $\frac{\partial \gamma}{\partial r}$ in different cases. }\label{gradient}
\begin{center}
\begin{tabular}{c|c|c|c}
\hline
$\Delta$ & $\lambda_u,\lambda_v$ & $\gamma$ & gradient\\
\hline
$\Delta \leq 0$ &     &       &     
$\frac{\partial \gamma}{\partial \bm o}=\bm 0,
\frac{\partial \gamma}{\partial r}=0$\\
\hline
            &   $\lambda_u<0, \lambda_v < 0$ & 0   &\\
$\Delta > 0$&  $\lambda_u<0, \lambda_v > 1$  & 1   &
$\frac{\partial \gamma}{\partial \bm o}=\bm 0,
\frac{\partial \gamma}{\partial r}=0$\\
           &  $\lambda_u>1,  \lambda_v  > 1$& 0    &\\
\hline
$\Delta > 0$ & $0\leq \lambda_u\leq 1, 0\leq \lambda_v  \leq 1$ & $\lambda_v-\lambda_u$& 
$
\frac{\partial \gamma }{\partial \bm o} =  4 \Delta^{-1/2}  [\bm x_i + \frac{-b}{2a}(\bm x_j -\bm x_i) - \bm o]
$\\
&      &     & 
$
\frac{\partial \gamma}{\partial r}= 4 \Delta^{-1/2} r 
$\\
\hline
$\Delta > 0$ &  $\lambda_u<0, 0\leq \lambda_v  \leq1$ & $\lambda_v$ &
$\frac{\partial \gamma }{\partial \bm o}
=   \frac{1}{2a} 
\big[2(\bm x_j - \bm x_i) +  \frac{1}{2} \Delta^{-1/2}\big(-4 b (\bm x_j - \bm x_i) -8a(\bm o - \bm x_i) \big) \big]$\\
& & &
$\frac{\partial \gamma }{\partial r}
= 2\Delta^{-1/2}r $\\
\hline
$\Delta > 0$ &  $0\leq \lambda_u\leq 1,  \lambda_v  > 1$ & $1-\lambda_u$ & 
$
\frac{\partial \gamma}{\partial \bm o}=
 \frac{1}{2a} \big[-2(\bm x_j - \bm x_i) +  \frac{1}{2} \Delta^{-1/2}\big(-4 b (\bm x_j - \bm x_i) -8a(\bm o - \bm x_i) \big) \big]
$\\
& & &
$\frac{\partial \gamma}{\partial r}
= 2\Delta^{-1/2}r $\\
\hline
\end{tabular}
\end{center}
\end{table*}

\subsection{Objective Function}

Based on the discussion in previous sections, with hinge loss and the regularization terms of $\sum_s \|M(A_s)\|_F + \|M(B)\|_F$, we propose the following optimization problem:
\begin{equation}
\begin{array}{cc}
\min\limits_{\Theta, \bm \xi}&  \frac{1}{N_1} \sum_{(i,j)}  \xi_{ij} +  \frac{1}{N_2}
\sum_{(m,n)} \xi_{mn} + \alpha \|\bm M(B)\|_F \\
&+\alpha \sum_s\|\bm M(A_s)\|_F \\ 
s.t. &  D_M (\bm x_i, \bm x_j) \leq 1-C+\xi_{ij} \\
&  D_M (\bm x_m, \bm x_n) \geq 1+C+\xi_{mn} \\
& \xi_{ij}, \xi_{mn} \geq 0, \bm M \in \bm M_{+}\\
&  i= 1,\dots, N, j\rightarrow i, m \nrightarrow n,
\end{array}
\end{equation}
where $\Theta=\{M(A_s), M(B), \bm o, \bm r\}$ denotes the set of parameters to be optimized; $j\rightarrow i$ indicates that $\bm x_j$ is $\bm x_i$'s $K$ nearest neighbor comparing against all instances in the same class; $m \nrightarrow n$ indicates that $\bm x_m$ is $\bm x_n$'s $K$ nearest neighbor comparing against all instances in the different class; and $\xi_{ij}$ and $\xi_{mn}$ indicates the errors. 
The regularization terms of $\|\bm M(B)\|_F$ and $\sum_s\|\bm M(A_s)\|_F$ control the complexity of metrics; 
$\alpha$ is a trade-off parameters; and $C$ is a constant which has the intuition of margin. 

The parameters to be optimized include local metrics $M(A_s)$, background metric $M(B)$, centers of influential regions $\bm o_s$ and radius of influential regions $r_s$. Thus in the proposed algorithm, we will learn the locations of influential regions ($\bm o_s, r_s$) and the metrics of influential/background regions ($M(B),M(A_s)$) under a same framework.

\subsection{Gradient Descent}

With $D_{M(A_s)}$ and $D_{M(B)}$ being the Mahalanobis distances, the optimization problem is not a convex problem even when we fix $\bm o, \bm r$ and update $M(A_s)$ and $M(B)$. Thus we simply adopt the gradient descent algorithm:
\begin{equation*}
    \Theta^{t+1} = \Theta^{t} - \beta \frac{\partial g} {\partial \Theta}|_{\Theta^{t}},
\end{equation*}
where $\beta$ is the learning rate, and the superscript $t$ denotes the time step during optimization. 

The objective function $g$ is
\begin{equation*}
\begin{split}
    &g= \\
    &\frac{1}{N_2} [1+C-D_M(\bm x_m, \bm x_n)]_+ +\frac{1}{N_1} [D_M(\bm x_i, \bm x_j)-(1-C)]_+\\
    &+ \alpha \sum_s \|M(A_s)\|_F + \alpha \|M(B)\|_F,
\end{split}
\end{equation*}
where the distance is
\begin{eqnarray*}
 D_{M}(\bm x_i, \bm x_j)&= & [1-\sum_s \gamma_s(\bm o_s, r_s)]_+  D_{M(B)} (\bm x_i,\bm x_j) +\\
 && \sum_s \gamma_s (\bm o_s, r_s)  D_{M(A_s)} (\bm x_i,\bm x_j).
\end{eqnarray*}
Here, $\gamma_s$ is written as $\gamma_s (\bm o_s, r_s)$ to remind us that $\gamma_s $ is a function of the location parameters $\bm o_s$ and $r_s$; $[x]_+=\max(x,0)$.

The gradient with respect to each set of parameters is
\begin{equation*}
\begin{split}
\frac{\partial g} {\partial \Theta}& |_{\Theta^{t}} = \\
& \frac{1}{N_2}  {\sum_{(m,n)}\mathbf 1[1+C-D_{M^t}(\bm x_m, \bm x_n) >0]}
\frac{\partial D_{M}(\bm x_m, \bm x_n)}{\partial \Theta}|_{\Theta^t}\\
+ & \frac{1}{N_1}  \sum_{(i,j)}\mathbf 1[D_{M^t}(\bm x_i, \bm x_j) -(1-C) >0]  \frac{\partial D_M(\bm x_i, \bm x_j)}{\partial \Theta}|_{\Theta^t}.
\end{split}
\end{equation*}
If the gradient is with respect to $M(B)$ and $M(A^s)$, then another shrinkage term of $\frac{\alpha M(B)}{2\|M(B)\|} $ or $\frac{\alpha M(A_s)}{2\|M(A_s)\|} $ from the Frobenius norm regularization term needs to be added into the above formula.

Now we will discuss $\frac{\partial D_M(\bm x_i, \bm x_j)}{\partial \Theta}|_{\Theta^t}$ for the parameters $M(A)$, $M(B)$, $\bm o^s$, $\bm r^s$ separately:
\begin{equation*}
\begin{split}
  & \frac{\partial D(\bm x_i, \bm x_j)}{\partial M(B)}|_{\Theta^t}   = \frac{ \mathbf 1[\gamma_b(\bm o_s^t, r_s^t) >0 ] }{2} \gamma_b(\bm o_s^t, r_s^t)\times\\
  &[(\bm x_i - \bm x_j)^T M^t(B)(\bm x_i - \bm x_j) ]^{-1/2} (\bm x_i - \bm x_j)(\bm x_i - \bm x_j)^T,
\end{split}
\end{equation*}
where $\gamma_b(\bm o_s^t, r_s^t)=1-\sum_s \gamma_s (\bm o_s^t, r_s^t)$;
\begin{equation*}
\begin{split}
   &\frac{\partial D(\bm x_i, \bm x_j)}{\partial M(A_s)}|_{\Theta^t} = \frac{ \gamma_s(\bm o_s^t, r_s^t) }{2}\times\\
   &[(\bm x_i - \bm x_j)^T M^t(A_s)(\bm x_i - \bm x_j) ]^{-1/2} (\bm x_i - \bm x_j)(\bm x_i - \bm x_j)^T;
\end{split}
\end{equation*}
\begin{equation*}
\begin{split}
   \frac{\partial D(\bm x_i, \bm x_j)}{\partial \bm o_s}|_{\Theta^t} =& \mathbf 1[1-\sum_s \gamma_s (\bm o_s^t, r_s^t) >0] D_{M^t(B)} \frac{\partial \gamma_s}{\partial \bm o_s} +\\
   & D_{M^t(A_s)}(\bm x_i, \bm x_j) \frac{\partial \gamma_s}{\partial \bm o_s},
\end{split}
\end{equation*}
where $\frac{\partial \gamma}{\partial \bm o}$ could be obtained as illustrated in Table~\ref{gradient};
\begin{equation*}
\begin{split}
   \frac{\partial D(\bm x_i, \bm x_j)}{\partial r_s}|_{\Theta^t} =& \mathbf 1[1-\sum_s \gamma_s (\bm o_s^t, r_s^t) >0] D_{M^t(B)} \frac{\partial \gamma_s}{\partial r_s} +\\
   & D_{M^t(A_s)}(\bm x_i, \bm x_j) \frac{\partial \gamma_s}{\partial r_s},
\end{split}
\end{equation*}
where $\frac{\partial \gamma}{\partial r}$ could be obtained as illustrated in Table~\ref{gradient}.

In this way, all of the gradients with respect to each set of parameters could be obtained and we can then use gradient descent to solve the optimization problem.

Initial values are very important for non-convex optimization problems. 
We adopt a heuristic method to initialize the parameters as follows. 1) Extract local discriminative direction $h(\bm x)\in R^F$ for each training instance $\bm x$, where $F$ indicates the number of features of $\bm x$:
\begin{equation*}
    h(\bm x_i)[f] = \sum_{k \nrightarrow i}|\bm x_k[f]-\bm x_i[f]| - \sum_{j\rightarrow i} |\bm x_j[f]-\bm x_i[f]|,
\end{equation*}
where $\bm x[f]$ indicates the $f$th dimension of vector $\bm x$;
$j\rightarrow i$ indicates $\bm x_j$ is $\bm x_i$'s $K$ nearest neighbor comparing against all instances in the same class; $k \nrightarrow i$ indicates $\bm x_k$ is $\bm x_i$'s $K$ nearest neighbor comparing against all instances in the different class.
2) Cluster with augmented features: $[\bm x, h(\bm x)]$ are used to cluster the instances into $S$ clusters.
3) Initialize the parameters:
Cluster centers are initialized as $\bm o_s$;
the distance between $80$ percentiles and the cluster center is set as initial value of $r_s$;
the local metric is set as $M(A_s) = I + 0.1\times {\rm diag}({\rm mean}( h(\bm x), {\bm x}\in \mbox{cluster } s))$,
where ${\rm diag}$ is an operation which returns a square diagonal matrix with elements of the input vector on the main diagonal. 


\section{Experiments}\label{sec:Experiments}

\begin{table}
\caption{Characteristics of 14 data sets: The total number of instances (and the numbers of instances in each class in brackets) and the number of features.}\label{dataset}
\begin{center}
\begin{tabular}{l| r| r}
\hline
 & Instances  & Features \\
\hline 
Australian & 690 (383, 307) & 14  \\
\hline   		
Breastcancer & 683 (444, 239) & 10  \\
\hline
Diabetes & 768 (268, 500) &8  \\
\hline
Fourclass & 862(555, 307) & 2\\
\hline
German & 1000 (700, 300) & 24 \\
\hline
Haberman & 206(81, 125) & 3\\
\hline  
Heart & 270 (150, 120) & 13\\
\hline
ILPD & 583(167, 416) & 10 \\
\hline
Liverdisorders& 345(145, 200) & 6\\
\hline  
Monk1 & 556 (278, 278) & 6\\
\hline
Pima&768(268, 500) &8\\
\hline
Planning & 182 (52, 130) & 12\\
\hline
Vote & 435 (168, 267) & 16  \\
\hline
WDBC & 569 (357, 212)  & 30\\
\hline
\end{tabular}
\end{center}
\end{table}

\begin{table*}
\caption{Metric learning algorithm Results: Mean accuracy and std are reported with the best ones in bold; `$\#$ of best' indicates the number of data sets that an algorithm performs the best.}\label{Experiment_results} 
\centering
\small
\resizebox{\textwidth}{!}{
\begin{tabular}{c|c|c|c|c|c|c|c|c|c}
\hline
Datasets        & LMNN       & ITML       & NCA        & MCML       & GMML                & RVML       & SCML                & R2LML               & Our                 \\ \hline
Australian     & 78.80$\pm$2.57 & 77.17$\pm$1.94 & 79.96$\pm$1.63 & 78.77$\pm$1.70 & 84.35$\pm$1.04          & 83.01$\pm$1.58 & 82.25$\pm$1.40          & 84.67$\pm$1.32          & \textbf{84.78$\pm$1.42} \\ \hline
Breastcancer   & 95.91$\pm$0.69 & 96.39$\pm$1.04 & 95.00$\pm$1.52 & 96.35$\pm$0.77 & \textbf{97.26$\pm$0.81} & 95.77$\pm$1.09 & 97.01$\pm$0.91          & 97.01$\pm$0.66          & 97.15$\pm$0.97          \\ \hline
Diabetes       & 69.16$\pm$1.44 & 69.09$\pm$1.24 & 68.47$\pm$2.46 & 69.19$\pm$1.18 & 74.16$\pm$2.58          & 71.04$\pm$2.60 & 71.49$\pm$2.21          & 73.80$\pm$1.37          & \textbf{75.19$\pm$1.47} \\ \hline
Fourclass      & 72.06$\pm$2.31 & 72.09$\pm$2.22 & 72.06$\pm$2.46 & 72.06$\pm$2.43 & 76.12$\pm$1.87          & 70.46$\pm$1.40 & 75.54$\pm$1.42          & 76.12$\pm$1.91          & \textbf{79.71$\pm$1.11} \\ \hline
German    & 67.85$\pm$1.54 & 66.95$\pm$2.05 & 69.95$\pm$2.88 & 67.67$\pm$1.48 & 71.55$\pm$1.12          & 71.65$\pm$1.78 & 70.90$\pm$2.65          & \textbf{72.90$\pm$1.83} & 72.45$\pm$1.41          \\ \hline
Haberman       & 67.89$\pm$3.34 & 67.97$\pm$4.05 & 67.40$\pm$3.33 & 67.56$\pm$2.75 & 71.22$\pm$3.35          & 66.67$\pm$2.30 & 69.19$\pm$2.47          & 71.06$\pm$3.39          & \textbf{74.07$\pm$3.97} \\ \hline
Heart          & 76.20$\pm$3.82 & 76.94$\pm$3.30 & 75.56$\pm$2.01 & 77.22$\pm$3.66 & 81.20$\pm$2.69          & 77.69$\pm$4.05 & 78.98$\pm$3.24          & \textbf{82.04$\pm$3.81} & 81.67$\pm$3.14          \\ \hline
ILPD           & 66.97$\pm$2.13 & 68.67$\pm$2.83 & 66.80$\pm$1.19 & 67.48$\pm$2.58 & 67.14$\pm$2.17          & 67.95$\pm$2.90 & 68.03$\pm$2.90          & 65.85$\pm$2.22          & \textbf{69.27$\pm$1.60} \\ \hline
Liverdisorders & 61.01$\pm$4.80 & 57.17$\pm$4.01 & 59.78$\pm$3.44 & 60.65$\pm$5.12 & 63.84$\pm$5.43          & 64.64$\pm$3.93 & 61.74$\pm$4.57          & \textbf{66.81$\pm$3.68} & 65.29$\pm$3.67          \\ \hline
Monk1          & 88.43$\pm$2.63 & 77.31$\pm$1.27 & 93.09$\pm$5.70 & 79.42$\pm$1.91 & 75.02$\pm$2.61          & 89.24$\pm$2.68 & \textbf{97.53$\pm$0.85} & 89.24$\pm$1.54          & 96.46$\pm$2.96          \\ \hline
Pima           & 68.54$\pm$1.64 & 67.95$\pm$2.01 & 65.91$\pm$3.04 & 68.31$\pm$2.33 & 72.95$\pm$1.84          & 69.45$\pm$1.68 & 71.14$\pm$2.64          & 72.34$\pm$1.54          & \textbf{74.32$\pm$1.27} \\ \hline
Planning       & 60.41$\pm$5.29 & 62.19$\pm$2.31 & 58.49$\pm$8.59 & 62.88$\pm$4.35 & 65.20$\pm$5.49          & 55.07$\pm$7.35 & 61.92$\pm$4.99          & 63.84$\pm$3.43          & \textbf{67.40$\pm$3.81} \\ \hline
Voting         & 94.83$\pm$0.77 & 90.75$\pm$1.44 & 94.77$\pm$0.92 & 92.64$\pm$1.58 & 95.17$\pm$1.88          & 95.75$\pm$1.26 & 95.00$\pm$1.30          & \textbf{96.32$\pm$1.19} & 95.75$\pm$1.30          \\ \hline
WDBC           & 96.58$\pm$1.12 & 94.91$\pm$0.92 & 96.58$\pm$0.85 & 95.70$\pm$0.90 & 96.71$\pm$0.78          & 96.58$\pm$1.34 & 96.97$\pm$0.89          & 96.93$\pm$1.67          & \textbf{97.28$\pm$0.92} \\ \hline
\# of best     & 0          & 0          & 0          & 0          & 1                   & 0          & 1                   & 4                   & 8                   \\ \hline 
\end{tabular}}
\end{table*}

We compare our algorithm with eight established metric learning algorithms from two categories: 1) The most cited algorithms, including Large Marin Nearest Neighbor (LMNN)~\cite{weinberger2009distance}, Information Theoretic metric learning (ITML)~\cite{davis2007information}, Neighborhood Component Analysis (NCA)~\cite{goldberger2005neighbourhood} and Metric learning by Collapsing Classes (MCML)~\cite{globerson2006metric}; (2) the most state-of-the-art algorithms, including Regressive Virtual Metric Learning (RVML)~\cite{perrot2015regressive}, Geometric Mean Metric Learning (GMML)~\cite{zadeh2016geometric}, Sparse Compositional Metric Learning (SCML)~\cite{shi2014sparse} and Reduced-Rank Local Distance Metric Learning (R2LML)~\cite{huang2013reduced}.
LMNN and ITML are implemented with metric-learn toolbox\footnote{https://all-umass.github.io/metric-learn/};
NCA and MCML are implemented with the drToolbox\footnote{https://lvdmaaten.github.io/drtoolbox/}; and GMML, RVML, SCML and R2LML are implemented by using the authors' code.

In our experiments, we focus on binary classification on 14 publicly available data sets from the websites of UCI\footnote{https://archive.ics.uci.edu/ml/datasets.html} and LibSVM\footnote{https://www.csie.ntu.edu.tw/~cjlin/libsvmtools/datasets/binary.html}, namely Australian, Breastcancer, Diabetes, Fourclass, Germannumber, Haberman, Heart, ILPD, Liverdisorders, Monk1, Pima, Planning, Voting and WDBC. 
The characteristics of these data sets are summarized in Table~\ref{dataset}. All data sets are pre-processed by firstly subtracting the mean and dividing by the standard deviation, and then normalizing the L2-norm of each instance to one.  

For each data set, $60\%$ instances are randomly selected as training samples and the rest for testing. 
This process is repeated 10 times and the mean accuracy and the standard deviation are reported.
We use 10-fold cross-validation to select the trade-off parameters in the compared algorithms, namely the regularization parameter of LMNN (from $\{0.1,0.3,0.5,0.7,0.9\}$), $\gamma$ in ITML (from $\{0.25,0.5,1,2,4\}$), $t$ in GMML (from $\{0.1,0.3,0.5,0.7,0.9\}$) and $\lambda$ in RVML (from $\{10^{-5},10^{-4},10^{-3},10^{-2},10^{-1},1,10\})$.  
All other parameters are set as default. 
For our algorithm, we set the parameters as follows: $\alpha$ and $C$ in the optimization formula are $0.1$ and $0.5$ respectively; $K$ in the classifier is $10$; and the number of clusters when initializing the parameters is $4$. 

As shown in Table~\ref{Experiment_results},  the proposed algorithm achieves the best accuracy on eight data sets out of the 14 data sets. None of the other algorithms performs the best in more than 4 data sets.  
In cases which our algorithm is not leading, it performs quite nice and stays close to the best one. 
Such encouraging results demonstrate the effectiveness of our proposed method.

\section{Conclusions and future work}

In this short paper, by introducing influential regions, we define a very intuitive distance and propose a novel local metric learning method. The distance can be computed efficiently and encouraging results are obtained on public data sets.

It is straightforward to extend the proposed algorithm to multi-class cases and adopt more advanced optimization techniques. Other metrics or other types of influential regions can also be adopted for specific tasks. Domain knowledge can be embedded into the partition of the regions. Tighter learning bounds and resultant penalty terms would be our future work.

\bibliographystyle{IEEEtran}
\bibliography{myref_new}

\end{document}